\typeout{IJCAI--21 Instructions for Authors}
\documentclass{article}
\pdfpagewidth=8.5in
\pdfpageheight=11in
\usepackage{ijcai21}
\usepackage{times}
\usepackage{soul}
\usepackage[hidelinks]{hyperref}
\usepackage[utf8]{inputenc}
\usepackage[small]{caption}
\usepackage{graphicx}
\usepackage{amsmath}
\usepackage{amsthm}
\usepackage{booktabs}
\usepackage{algorithmic}
\urlstyle{same}
\usepackage{amsfonts}
\usepackage{bm}
\usepackage[ruled]{algorithm2e} 
\usepackage{multicol}

\usepackage{algorithmic}
\usepackage{color}
\usepackage{enumitem}
\usepackage{multicol}
\usepackage{multirow}
\usepackage[table]{xcolor}
\usepackage{xurl}
\newcommand\ChangeRT[1]{\noalign{\hrule height #1}}
\usepackage{flushend}

\newcommand{\ie}{\emph{i.e., }}

\newcommand{\eg}{\emph{e.g., }}



\newtheorem{theorem}{Theorem}




\pdfinfo{
/TemplateVersion (IJCAI.2021.0)
}

\title{Data-Efficient Reinforcement Learning for Malaria Control}

\author{
Lixin Zou$^1$
\and
Long Xia$^1$\and
Linfang Hou$^2$\and 
Xiangyu Zhao$^{3}$\And
Dawei Yin$^1$
\affiliations
$^1$Baidu Inc., $^2$ JD.com\\
$^3$Michigan State University\\
\emails
\{zoulixin15, long.phil.xia, houlinfang09\}@gmail.com,
zhaoxi35@msu.edu,
yindawei@acm.org
}


\begin{document}

\maketitle

\begin{abstract}
Sequential decision-making under cost-sensitive tasks is prohibitively daunting, especially for the problem that has a significant impact on people's daily lives, such as malaria control, treatment recommendation. The main challenge faced by policymakers is to learn a policy from scratch by interacting with a complex environment in a few trials. This work introduces a practical, data-efficient policy learning method, named Variance-Bonus Monte Carlo Tree Search~(VB-MCTS), which can copy with very little data and facilitate learning from scratch in only a few trials. Specifically, the solution is a model-based reinforcement learning method. To avoid model bias, we apply Gaussian Process~(GP) regression to estimate the transitions explicitly. With the GP world model, we propose a variance-bonus reward to measure the uncertainty about the world. Adding the reward to the planning with MCTS can result in more efficient and effective exploration. Furthermore, the derived polynomial sample complexity indicates that VB-MCTS is sample efficient. Finally, outstanding performance on a competitive world-level RL competition and extensive experimental results verify its advantage over the state-of-the-art on the challenging malaria control task. 
\end{abstract}

\section{Introduction}

Malaria is a mosquito-borne disease that continues to pose a heavy burden on South Sahara Africa~(SSA)~\cite{moran2007malaria}. Recently, there has been significant progress in improving treatment efficiency and reducing the mortality rate of malaria. Unfortunately, due to financial constraints, the policymakers face the challenge of ensuring continued success in disease control with insufficient resources. To make intelligent decisions, learning control policies over the years have been formulated as Reinforcement Learning~(RL) problems~\cite{bent2018novel}. Nevertheless, applying RL to malaria control seems to be a tricky issue since RL usually requires numerous trial-and-error searches to learn from scratch. Unlike simulation-based games, \eg Atari games~\cite{mnih2013playing} and game GO~\cite{silver2017mastering}, the endless intervention trial is unacceptable to regions over the years since the actual cost of life and money is enormous. Hence, as in many human-in-loop systems~\cite{zou2019longterm,zou2020neural,zou2020pseudo}, it is too expensive to apply RL to learn malaria intervention policy from scratch directly.

Therefore, to reduce the heavy burden of malaria in SSA, it is urgent to improve the data efficiency of policy learning. In~\cite{bent2018novel}, novel exploration techniques, such as Genetic Algorithm~\cite{holland1992genetic}, Batch Policy Gradient~\cite{sutton2000policy} and Upper/Lower Confidence Bound~\cite{auer2010ucb}, have been firstly applied to learn malaria control policies from scratch. However, these solutions are introduced under the Stochastic Multi-Armed Bandit~(SMAB) setting, which myopically ignores the delayed impact of the interventions in the future and might result in serious problems. For example, the large-scale use of spraying may lead to mosquito resistance and bring about the uncontrolled spread of malaria in the coming years. Hence, it requires us to optimize disease control policies in the long run, which is far more challenging than the exploration in SMAB.

Considering the long-term effects, the finite horizon continuous-space Markov Decision Process is employed to model the disease control in this work. Under this setting, we propose a framework named Variance-Bonus Monte Carlo Tree Search~(VB-MCTS) for data-efficient policy searching, illustrated in Figure~\ref{fig:solution}. Particularly, it is a model-based training framework, which iterates between updating the world model and collecting data. In model training, Gaussian Process~(GP) is used to approximate the state transition function with collected rollouts. As a non-parametric probabilistic model, GP can avoid the model bias and explicitly model the uncertainty about the transitions, \ie~the variance of the state. In data collection, we propose to employ MCTS for generating the policy with the mean MDP plus variance-bonus reward. The variance-bonus reward can decrease the uncertainty at the state-action pairs with high potential reward by explicitly motivating the agent to sample the state-actions with the highest upper-bounded reward. Furthermore, the sample complexity of the proposed method indicates that it is a PAC optimal exploration solution for malaria control. Finally, to verify the effectiveness of our policy search solution, extensive experiments are conducted on the malaria control simulators\footnote{\url{https://github.com/IBM/ushiriki-policy-engine-library}}~\cite{bent2018novel}, which are gym-like\footnote{\url{https://gym.openai.com}} environments for KDD Cup 2019~\cite{zhou2020kdd}. The outstanding performance on the competition and extensive experimental results demonstrated that our approach could achieve unprecedented data efficiency on malaria control compared to the state-of-the-art methods.

\begin{figure}
\centering
\includegraphics[width=2.7in]{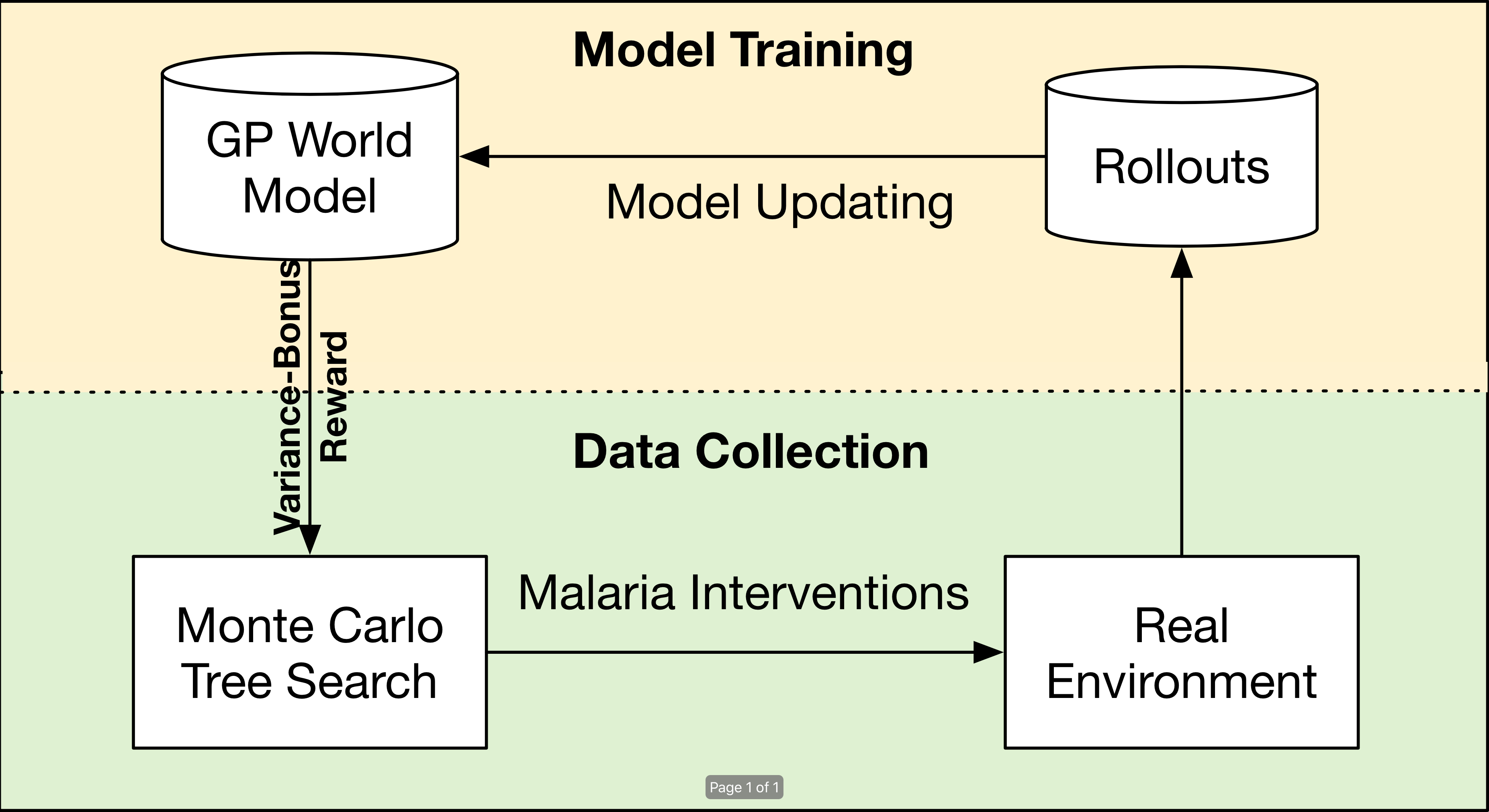}
    \caption{Overview of the proposed learning system on Malaria Control. The system 
    alternates model training and data collection.}
    \label{fig:solution} 
\end{figure}

Our main contributions are: \textbf{(1)} We propose a highly data-efficient learning framework for malaria control. Under the framework, the policymakers can successfully learn control policies from scratch within 20 rollouts. \textbf{(2)} We derive the sample complexity of the proposed method and verify that VB-MCTS is an efficient PAC-MDP algorithm. \textbf{(3)} Extensive experiments conducted on malaria control demonstrate that our solution can outperform the state-of-the-art methods.

\section{Related Work}
As a highly pathogenic disease, malaria has been widely studied from the perspective of predicting the disease spread, diagnosis, and personalized care planning. However, rarely work focuses on applying RL to learn the cost-effectiveness intervention strategies, which plays a crucial role in controlling the spread of malaria~\cite{moran2007malaria}. In~\cite{bent2018novel}, malaria control has firstly been formulated as a stochastic multi-armed bandit (SMAB) problem and solved with novel exploration techniques. Nevertheless, SMAB based solutions only myopically maximize instant rewards, and the ignorance of delayed influences might result in disease outbreaks in the future. Therefore, in this work, a comprehensive solution has been proposed to facilitate policy learning in a few trials under the setting of finite-horizon MDP. 

Another topic is data-efficient RL. To increase the data efficiency, we are required to extract more information from available trials~\cite{deisenroth2011pilco}, which involves utilizing the samples in the most efficient way (\eg exploitation) and choosing the samples with more information (\eg exploration). Generally, for exploitation, model-based methods~\cite{ha2018recurrent,kamthe2017data} are more sample efficient but require more computation time for the planning. Model-free~\cite{szita2006learning,krause2016cma,van2009theoretical} methods are generally computationally light and can be applied without a planner, but need (sometimes exponentially) more samples, and are usually not efficient PAC-MDP algorithm~\cite{strehl2009reinforcement}. For exploration, there are two options: \textbf{(1)} Bayesian approaches, considering a distribution over possible models and acting to maximize expected reward; unluckily, it is intractable for all but very restricted cases, such as the linear policy assumption in PILCO~\cite{deisenroth2011pilco}. \textbf{(2)} intrinsically motivated exploration, implicitly negotiating the exploration/exploitation dilemma by always exploiting a modified reward for directly accomplishing exploration. However, on the one hand, the vast majority of papers only address the discrete state case, providing incremental improvements on the complexity bounds, such as ~MMDP-RB~\cite{sorg2012variance}, metric-E$^3$~\cite{kakade2003exploration}, and  BED~\cite{kolter2009near}. On the other hand, for more realistic continuous state space MDP, over-exploration has been introduced for achieving polynomial sample complexity in many work, such as KWIK~\cite{li2011knows}, and GP-Rmax~\cite{grande2014sample}. These methods will explore all regions equally until the reward function is highly accurate everywhere. By drawing on the strength of existing methods, our solution is a model-based RL framework, which efficiently plans with MCTS and trade-off exploitation and exploration by exploiting a variance-bonus reward.

\section{Proposed Method: VB-MCTS}
\subsection{Malaria Control as MDP}
Finding an optimal malaria control policy can be posted as a reinforcement learning task by illustrating it as a Markov Decision Process~(MDP).
Specifically, we formulate the task as a finite-horizon MDP, defined by the tuple $\langle\mathcal{S},\mathcal{A},P,R,\gamma\rangle$ with $\mathcal{S}$ as the potential infinite state space, $\mathcal{A}$ as the finite set of actions, $P:\mathcal{S}\times\mathcal{A} \rightarrow \mathcal{S}$ as the deterministic transition function,  $R:\mathcal{S}\rightarrow \mathbb{R}$ as the reward function, and $\gamma\in (0,1]$ as the discount factor.
In this case, we face the challenge of developing an efficient policy for a population over a 5 year intervention time frame.
As shown in Figure \ref{fig:mdp}, the corresponding components in malaria control are defined as,

\begin{figure}
\centering
\includegraphics[width=2.7in]{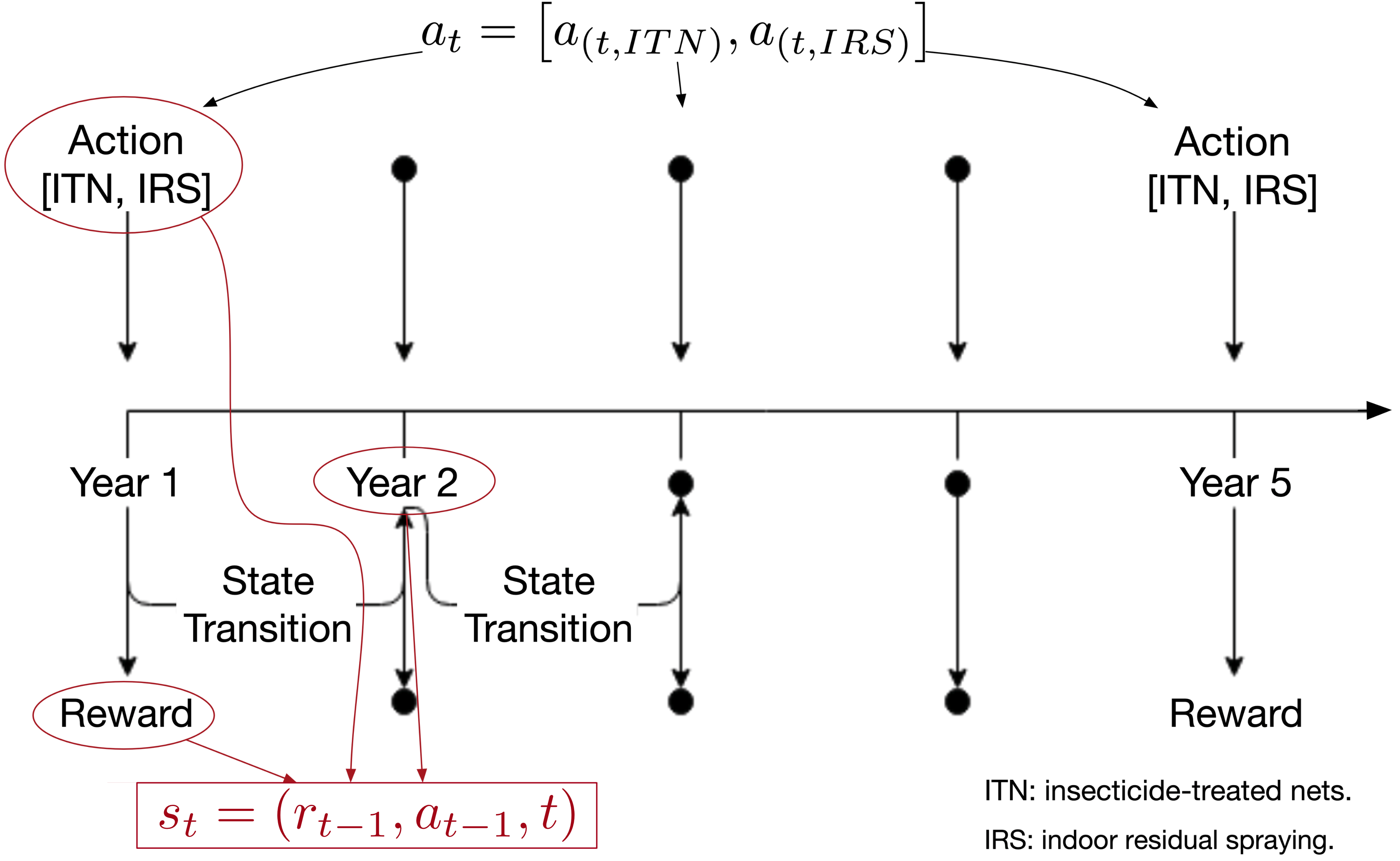}
    \caption{Malaria intervention as MDP.}
    \label{fig:mdp} 
\end{figure}

\paragraph{Action} The actions are the available means of interventions, including the mass-distribution of long-lasting insecticide-treated nets~(ITNs) and indoor residual spraying~(IRS) with pyrethroids in SSA~\cite{stuckey2014modeling}. In this work, the action space $\mathcal{A}$ is constructed through $a_i \in \mathcal{A} = \{(a_{ITN}, a_{IRS})\}$ with $a_{ITN},a_{IRS} \in (0,1]$, which represent the population coverage for ITNs and IRS of a specific area. Without significantly affecting performance, we discrete the action space with an accuracy of 0.1 for simplicity.

\paragraph{Reward} The reward is a scalar $r_t=R(s_{t+1})\in [R_{\min},R_{\max}]$, associated with next state $s_{t+1}$. In malaria control, it is determined through an economic cost-effectiveness analysis. In~\cite{bent2018novel}, an overview of the reward calculation is specified. Without loss of generality, the reward function $R(\ast)$ is assumed known to us since an MDP with unknown rewards and unknown transitions can be represented as an MDP with known rewards and unknown transitions by adding additional states to the system.
 
 \paragraph{State} The state contains important observations for decision making in every time step. In malaria control, it includes the number of life with disability, life expectancy, life lost, and treatment expenses~\cite{bent2018novel}. We set the state in the form $s_t = (r_{t-1},a_{t-1},t)$, including current reward $r_{t-1}$, previous action $a_{t-1}$ and the current intervention timestamp, which covers the most crucial and useful observations for malaria control, as shown in Figure \ref{fig:mdp}. For the start state $s_1$, the reward and action are initialized with 0. 

Let $\pi:\mathcal{S}\rightarrow\mathcal{A}$ denote a deterministic mapping from states to actions, and let $V_{\pi}(s_t)=\mathbb{E}_{a_{i}\sim\pi(s_i)}\left[\sum_{i=t}^{T} \gamma^{i-t} r_i\right]$ denote the expected discounted reward by following policy $\pi$ in state $s_t$. The objective is to find a deterministic policy $\pi^\ast$ that maximizes the expected return at state $s_1$ as 
\begin{eqnarray}\nonumber
    \pi^\ast = {\arg\max}_{\pi\in \Pi} V_{\pi}(s_1).
\end{eqnarray}

\subsection{Model-based Indirect Policy Search}
In the following, we detail the key components of the proposed framework VB-MCTS, including the world model, the planner, and variance-bonus reward with its sample complexity.

\paragraph{World Model Learning}
The probabilistic world model is implemented as a GP, where we use a predefined feature mapping $x_t = \phi(s_t,a_t) \in \mathbb{R}^{m}$ as training input and the target state $s_{t+1}$ as the training target. The GP yields one-step predictions
\begin{eqnarray}\nonumber
    p(s_{t+1}|s_t,a_t) &=& \mathcal{N} (s_{t+1}|\mu_t,\sigma^2_t)\\ \nonumber
    \mu_t &=& \bm{k}_{\ast}^\top(K+\omega^2_n I)^{-1}\bm{y},\\ \nonumber
    \sigma^2_t &=& k(x_{t},x_{t}) - \bm{k}_{\ast}^\top(K+\omega^2_n I)^{-1}\bm{k}_{\ast},
\end{eqnarray}
where $k$ is the kernel function, $\bm{k}_{\ast} \equiv k(\bm{X},x_{t})$ denotes the vector of covariances between the test point and all training points with $\bm{X}=[x_1,\cdots,x_n] $, and $\bm{y}=[s_1,\dots,s_n]^\top$ is the corresponding training targets. $\omega_n$ is the noise variance. $K$ is the Gram matrix with entries $K_{ij} = k(x_i,x_j)$.  

Throughout this paper, we consider a prior mean function $m \equiv 0$ and a squared exponential~(SE) kernel with automatic relevance determination. The SE covariance function is defined as
\begin{eqnarray}\nonumber
     k(x,x') = \alpha^2 \exp (-\frac{1}{2}(x-x')^\top \mathbf{\Lambda}^{-1} (x-x')),
\end{eqnarray}
where $\alpha^2$ is the variance of state transition and $\bm{\Lambda} \equiv \operatorname{diag}\left(\left[l_{1}^{2}, \ldots, l_{m}^{2}\right]\right)$. The characteristic length-scale $l_i$ controls the importance of $i$-th feature. Given $n$ training inputs $\bm{X}$ and the corresponding targets $\bm{y}$, the posterior hyper-parameters of GP~(length-scales $l_i$ and signal variance $\alpha^2$) are determined through evidence maximization technique~\cite{williams2006gaussian}. 

\paragraph{Exploration with Variance-Bonus Reward}

The algorithm we propose is itself very straightforward and similar to many previously proposed exploration
heuristics~\cite{kolter2009near,srinivas2009gaussian,sorg2012variance,grande2014computationally}. We call the algorithm Variance-Bonus Reward, since it chooses action according to the current mean estimation of the reward plus an additional variance-based reward bonus for state-actions that have not been well explored as
{\small
\begin{eqnarray} \nonumber
&& \tilde{R}(s_{t+1}|s_t,a_t) =
    R(s_{t+1})|_{s_{t+1} = \mathbb{E}_{s_{t+1}\sim GP(s_t,a_t)}[s_{t+1}]} + \\ \nonumber
&& \beta_1 \text{Var}_{s_{t+1}\sim GP(s_t,a_t)}[R(s_{t+1})]
    +	 \beta_2 \text{Var}_{s_{t+1}\sim GP(s_t,a_t)}[s_{t+1}],
\end{eqnarray}}
where $\beta_1$ and $\beta_2$ are the parameters that trade-off the balance of exploitation and exploration. $\text{Var}_{s_{t+1}\sim GP(s_t,a_t)}[R(s_{t+1})]$ and $\text{Var}_{s_{t+1}\sim GP(s_t,a_t)}[s_{t+1}]$ are the predicted variances for state and reward. The variance of reward can be exactly computed following the law of iterated variances.

\paragraph{Planning with Mean MDP + Reward Bonus}
MCTS is a strikingly successful planning algorithm~\cite{silver2017mastering}, which can find out the optimal solution with enough computation resources. Since disease control is not a real-time task, we propose to apply MCTS~(Figure~\ref{fig:solution}) as the planner for generating the policy to maximize the variance-bonus reward. In the executing process, the MCTS planner incrementally builds an asymmetric search tree guided to the most promising direction by a tree policy. This process usually consists of four phases --- \textbf{selection}, \textbf{expansion}, \textbf{evaluation}, and \textbf{backup}~(as shown in Figure~\ref{fig:mcts}).

Specifically, each edge $(s,a)$ of the search tree stores an average action value $\tilde{Q}(s, a)$ and visit count $N(s, a)$. In the \textbf{selection} phase, starting from the root state, the tree is traversed by simulation~(that is, descending the tree with the mean prediction of states without backup). At each time step $t$ of each simulation, an action $a_t$ is selected from state $s_t$ 
    \begin{eqnarray*}
        a_{t} = {\arg\max}_{a\in \mathcal{A}} ( \tilde{Q}(s_t,a)+ \frac{c_{puct}}{|\mathcal{A}|}\frac{\sqrt{\sum_{b\in \mathcal{A}}N(s_t,b)}}{1+N(s_t,a)} ),
    \end{eqnarray*} 
so as maximize action value plus a bonus that decays with repeated visits to encourage exploration for tree search. Here, $c_{puct}$ is the constant determining the level of exploration. When the traversal reaches a leaf node $s_L$ at step $L$, the leaf node may be \textbf{expanded} with each edge initialized as $\tilde{Q}(s_L,a) = 0$, $N(s_L,a) = 0$, and the corresponding leaf nodes are initialized with the mean GP prediction $s_{L+1}=\mathbb{E}[s_{L+1}|s_L,a]$. Then, the leaf node is \textbf{evaluated} by the average outcome $v(s_L)$ of rollouts, which played out until terminal step $T$ using the fast rollout policies, such as random policy and greedy policy. At the end of the simulation, the action values and visit counts of all traversed edges are updated, \ie \textbf{backup}. Each edge accumulates the visit counts and means the evaluation of all simulations passing through that edge as
\begin{eqnarray*}
      \tilde{Q}(s_j,a_j) &\leftarrow & \frac{N(s_j,a_j)\times \tilde{Q}({s}_j,{a}_j)+v({{s}_j})}{N(s_j,a_j)+1},\\
  v({s}_j) &\leftarrow & v({{s}_{j+1}})+\tilde{R}(s_{j+1}|s_j,a_j),\\
  N(s_j,a_j) &\leftarrow &  N(s_j,a_j)+1,
\end{eqnarray*}
where $(s_j,a_j)$ with $j< L$ is the edge in the forward trace.

\begin{figure}
\includegraphics[width=2.7in]{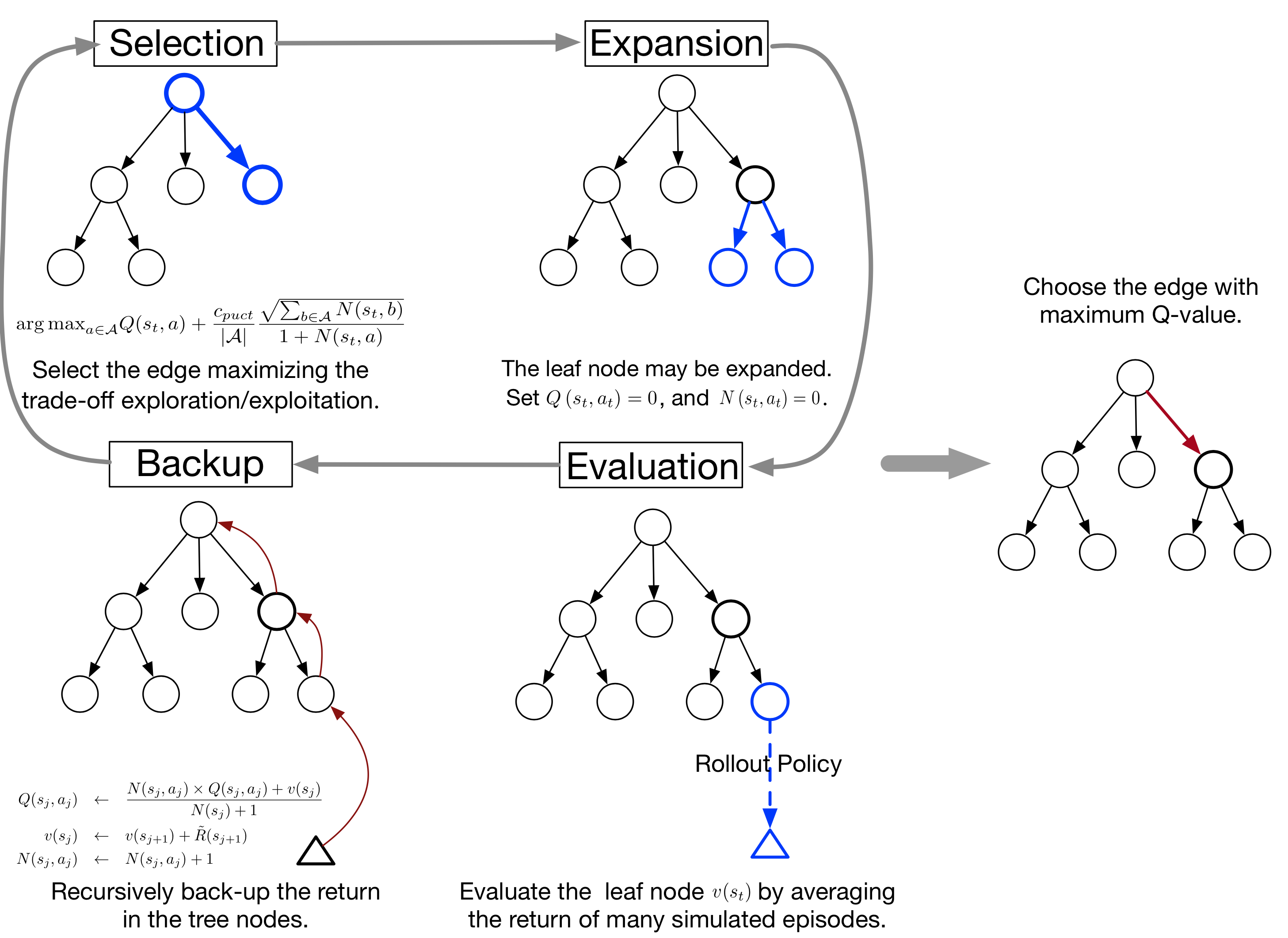}
    \caption{An illustration of MCTS for policy generation.}
    \label{fig:mcts} 
\end{figure}

This cycle of selection, evaluation, expansion and backup is repeated until the maximum iteration number has been reached. At this point, the best action has been chosen by selecting the action that leads to the highest reward (max child) as follows 
\begin{eqnarray}\label{equ:mcts_action}
    \pi(s_{t}) = \arg\max_{a\in\mathcal{A}} \tilde{Q}(s_t,a),
\end{eqnarray}
where $\pi(s_{t})$ is the policy generated by MCTS. To be noticed, the variance-bonus reward $\tilde{R}$ can be replaced with the $R$ for generating the best-known policy since it maximizes the expected reward based on the posterior so far.

\begin{algorithm}[tbh]
\small
\SetAlgoLined
\KwIn{Hype-parameters $\beta$, maximum trials number $N$.}
\KwOut{The malaria control policy $\bm{\pi}$.}
{\color{blue}\# Initialization.} \\
Generate random policy $\bm{\pi} = \left[a_1,\dots,a_T\right]$.\\
Apply policy $\bm{\pi}$ in real world and collect training samples $\mathcal{D}=\{(s_i,a_i,s_{i+1})\}_{i=1}^{T}$.\\
Update GP world model with $\mathcal{D}$.\\
Set number of trials $k=1$. \\
\For{$k<=N$}{
    Initialize time step $t=1$, and state $s_t$.\\
    \For{$t<=T$}{
    {\color{blue}\# Data Collection.} \\
    Generate action $a_t =\arg\max_{a\in\mathcal{A}} Q(s_t,a) $ by MCTS with \textbf{variance-bonus reward} $\tilde{R}$. \\ 
    Apply $a_t$ in real world and collect $s_{t+1}$.\\
    Update training samples $\mathcal{D} = \mathcal{D} \cup \{(s_t,a_t,s_{t+1})\}$.\\
    {\color{blue}\# Model Training.}\\
    Update GP world model with $\mathcal{D}$.\\
    Update $s_t=s_{t+1}$, and $t=t+1$.\\
    }
    Increase the training trials $k=k+1$.
}
{\color{blue}\# Generate final decision.}\\ 
Initialize time step $t=1$, state $s_t$, and policy $\bm{\pi}=\left[\right]$.\\
\For{$t<=T$}{
Generate action $a_t =\arg\max_{a\in\mathcal{A}} Q(s_t,a) $ by MCTS with \textbf{mean reward} $R$. \\ 
Predict $s_{t+1} = \mathbb{E}[s_{t+1}|s_t,a_{t}]$ with Gaussian Process.\\
Update $s_t=s_{t+1}$, and $t=t+1$.\\
Append action to final decision $\bm{\pi} = \bm{\pi}\oplus a_t$.
}
Return the malaria control policy $\bm{\pi}$.
\caption{Variance-Bonus Monte Carlo Tree Search}
\label{alg:training}
\end{algorithm}

Finally, we implement an iterative training procedure, as shown in Algorithm \ref{alg:training}, where we specify the order in which each component occurs within the iteration.

\subsection{Sample Complexity}
In the following, we derive sample complexity (\eg the required samples to learn near-optimal performances) for our proposed solution. In the worst case, when the reward function is equal everywhere, and all state-action pairs will be equally explored, VB-MCTS will have the same complexity bounds as over-exploration methods, such as GP-Rmax ~\cite{grande2014sample}. In practice, VB-MCTS will learn the optimal policy in fewer steps than the over-exploration methods because the high uncertainty region with low reward is not worth exploration. Following the general PAC-MDP theorem, Theorem 10 in~\cite{strehl2009reinforcement}, we derive the polynomial sample complexity of VB-MCTS.

\begin{theorem}\label{th:sample_complexity}
Assume that the feature space $\mathcal{X}\subset\mathbb{R}^{m}$ of state-action pairs is a compact domain and the target value is bounded $y\in [0, V_m]$. The reward and action value are Lipschitz continuous w.r.t the state-action pairs and let $L_q$ and $L_r$ be the Lipschitz constants for reward and action value respectively. If VB-MCTS is executed with $\beta_1 = \frac{L_r}{2 \sigma_{n}^{2}}$ and $\beta_2 = \frac{L_p}{2 \sigma_{n}^{2}}$ for any MDP $M$, then with probability $1-\delta$, VB-MCTS will follow a $4\epsilon$-optimal policy from its current state on all but
\begin{eqnarray}
    O\left(\frac{V_{m} \zeta(\epsilon, \delta)}{\epsilon(1-\gamma)} \log \frac{1}{\delta} \log \frac{1}{\epsilon(1-\gamma)}\right)
\end{eqnarray}
timesteps, with probability at least $1 - 2\delta$, where
$\zeta(\epsilon, \delta)=\left(\frac{4 V_{m}^{2}}{\epsilon^{2}(1-\gamma)^{2}} \log \left(\frac{2\left|\mathcal{N}_{\mathcal{X}}\left(\tau\left(\sigma_{\text {tol }}^{2}\right)\right)\right|}{\delta}\right)\right) \mathcal{N}_{\mathcal{X}}\left(\tau\left(\sigma_{\text {tol }}^{2}\right)\right)$,
$ \sigma_{\text{tol}}^{2} = \frac{2 \omega_{n}^{2} \epsilon_{1}^{2}}{V_{m}^{2} \log \left(\frac{2}{\delta_{1}}\right)}$. $\mathcal{N}_{\mathcal{X}}(\tau)$ is the covering number of domain $\mathcal{X}$, which defines the cardinality of the minimal set $C=\left\{c_{i}, \ldots, c_{N_{c}}\right\} \text { s.t. } \forall x \in \mathcal{X}, \exists c_{j} \in C \text { s.t. } d\left(x, c_{j}\right) \leq \tau,$ where $d(\cdot,\cdot)$ is a distance measure.
\end{theorem}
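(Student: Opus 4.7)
The plan is to invoke the general PAC-MDP framework, specifically Theorem 10 of~\cite{strehl2009reinforcement}, which reduces the question to verifying three conditions for VB-MCTS viewed as a greedy policy with respect to the internal value estimate $\tilde{Q}$: (i) \emph{optimism}, $\tilde{Q}(s,a)\ge Q^{\ast}(s,a)$ with high probability on the empirical model; (ii) \emph{accuracy} of $\tilde{Q}$ on the set of ``known'' state-action pairs; and (iii) a polynomial bound on the number of escape events from that known set. The glue between these conditions and the GP world model is the standard GP concentration inequality (in the style of~\cite{srinivas2009gaussian}), which links the posterior variance $\sigma_t^2$ at a test point to the probability that the GP posterior mean deviates from the true transition by more than a tolerance.

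First I would define the known set as $K=\{(s,a):\sigma^{2}(s,a)\le \sigma_{\text{tol}}^{2}\}$. Plugging the stated $\sigma_{\text{tol}}^{2}=2\omega_{n}^{2}\epsilon_{1}^{2}/(V_{m}^{2}\log(2/\delta_{1}))$ into a Gaussian tail bound on the GP residual produces $|\mu(s,a)-\mathbb{E}[s'\mid s,a]|\le \epsilon_{1}$ with probability $1-\delta_{1}$. Propagating this one-step error through the Bellman operator using the Lipschitz constants $L_{r}$ (reward) and $L_{q}$ (action-value), and summing the discounted horizon, yields an $\epsilon$-accurate simulated return on $K$, which is condition (ii).

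Second, for optimism, I would show that with the calibrated coefficients $\beta_{1}=L_{r}/(2\sigma_{n}^{2})$ and $\beta_{2}=L_{p}/(2\sigma_{n}^{2})$ the variance bonus pointwise dominates the error incurred by replacing the true reward and next-state distribution with their GP mean estimates. Assuming the MCTS planner returns an approximately Bellman-optimal action on the modified MDP (a standard consequence of UCT with sufficient simulations), a backwards induction on $t$ then lifts the pointwise domination to $\tilde{Q}(s,a)\ge Q^{\ast}(s,a)$, which is condition (i).

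Third, to bound the number of escape events I would use a covering argument. Cover $\mathcal{X}$ by $\mathcal{N}_{\mathcal{X}}(\tau(\sigma_{\text{tol}}^{2}))$ balls whose radius $\tau$ is chosen, using the smoothness of the SE kernel, so that a single in-ball observation drives the posterior variance below $\sigma_{\text{tol}}^{2}$ uniformly over that ball; each ball therefore contributes only a bounded number of ``unknown'' visits. A union bound over the cover produces the compound factor $\zeta(\epsilon,\delta)$, and feeding this learning complexity into Theorem 10 of~\cite{strehl2009reinforcement} yields the advertised $\tilde{O}\!\left(V_{m}\zeta/(\epsilon(1-\gamma))\right)$ bound on suboptimal timesteps, while the separate union bounds for optimism and accuracy combine to give total failure probability at most $2\delta$. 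The main obstacle will be step three: turning feature-space distance into a uniform reduction of GP posterior variance via the SE kernel's Lipschitz properties, and performing the bookkeeping so that the per-ball failure probabilities aggregate cleanly into the stated $\zeta$ expression.
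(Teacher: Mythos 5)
Your proposal follows essentially the same route as the paper's proof: both reduce the theorem to the three PAC-MDP conditions (optimism via the calibrated variance bonus dominating the Lipschitz-propagated model error, accuracy on the low-variance ``known'' set via a GP tail bound at threshold $\sigma_{\text{tol}}^{2}$, and a covering-number bound on escape events) and then invoke Theorem 10 of \cite{strehl2009reinforcement}; the paper additionally leans on Lemma 1 of \cite{grande2014sample} for the accuracy step and Lemma 8 of \cite{strehl2009reinforcement} for the final count, exactly as you anticipate.

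One step in your third stage would fail as literally stated: you claim the ball radius $\tau$ can be chosen so that \emph{a single} in-ball observation drives the posterior variance below $\sigma_{\text{tol}}^{2}$. With prior variance normalized to $1$ and noise variance $\omega_{n}^{2}>0$, one observation at distance zero still leaves posterior variance $\omega_{n}^{2}/(1+\omega_{n}^{2})$, so no choice of radius can beat a tolerance $\sigma_{\text{tol}}^{2}=2\omega_{n}^{2}\epsilon_{1}^{2}/(V_{m}^{2}\log(2/\delta_{1}))$ once $\epsilon_{1}$ is small. The paper handles this correctly by requiring at least $\omega_{n}^{2}/(\sigma_{\text{tol}}^{2}-1+\rho^{2})$ observations inside each ball $B(x,d_{\max}/2)$, where $\rho=e^{-d_{\max}^{2}/(2\theta^{2})}$ lower-bounds the kernel value within the ball, before the posterior variance bound $\sigma_{n}^{2}(x)\le 1-n\rho^{2}/(n+\omega_{n}^{2})$ drops below tolerance. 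Your subsequent sentence (``each ball contributes only a bounded number of unknown visits'') is compatible with this fix, so the repair is local and the overall architecture of your argument matches the paper's.
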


\begin{proof}[Sketch of Proof]
The proof is organized by showing the required key properties (optimism, accuracy, and learning complexity) for general PAC-MDP~\cite{strehl2009reinforcement} are satisfied.
 
\paragraph{Optimism} Assume that MCTS can find the optimal decision under every state $s$ with enough computation resources. Then, $\forall (s_t,a_t)$, the optimal action value $Q^\ast(s_t,a_t)$ satisfies
{\small
\begin{equation*}
\begin{aligned}
& Q^{*}\left(s_{t}, a_{t}\right)=\arg \max _{a_{t+1} \in \mathcal{A}} R\left(s_{t+1}\right)+\gamma Q^{*}\left(s_{t+1}, a_{t+1}\right) \\
\leq & R\left(s_{t+1}^{\prime}\right)+\frac{L_{r}}{\omega_{n}^{2}} \operatorname{Var}_{s_{t+1}^{\prime}\sim G P\left(s_{t}, a_{t}\right)} \left[R\left(s_{t+1}^{\prime}\right)\right] \\
+& \gamma \tilde{Q}^{*}\left(s_{t+1}^{\prime}, a_{t+1}\right)+\gamma \frac{L_{q}}{\omega_{n}^{2}} \operatorname{Var}_{s_{t+1}^{\prime}\sim G P\left(s_{t}, a_{t}\right)} \left[s_{t+1}^{\prime}\right] \\
\leq & \arg \max _{a_{t+1}^{\prime} \in \mathcal{A}} R\left(s_{t+1}^{\prime}\right)+\frac{L_{r}}{\omega_{n}^{2}} \operatorname{Var}_{s_{t+1}^{\prime}\sim G P\left(s_{t}, a_{t}\right)} \left[R\left(s_{t+1}^{\prime}\right)\right] \\
+& \gamma \tilde{Q}^{*}\left(s_{t+1}^{\prime}, a_{t+1}^{\prime}\right)+\gamma \frac{L_{q}}{\omega_{n}^{2}} \operatorname{Var}_{s_{t+1}^{\prime}\sim G P\left(s_{t}, a_{t}\right)} \left[s_{t+1}^{\prime}\right] \\
=& \tilde{Q}\left(s_{t}, a_{t}\right),
\end{aligned} 
\end{equation*}
}
where $s_{t+1} {=} P(s_t,a_t)$, $s'_{t+1} {=}\mathbb{E}_{s_{t+1}\sim
GP(s_t,a_t)}[s_{t+1}|s_t,a_t]$. 
The maximum error of propagating $Q(s_{t+1},a_{t+1})$ instead of $Q(s'_{t+1},a_{t+1})$ is given by $\|s'_{t+1}-s_{t+1}\|L_q$. 
The norm $\|s'_{t+1}-s_{t+1}\|$ is upper bounded by the regularization error
$\frac{L_{q}}{\omega_{n}^{2}} \operatorname{Var}_{s_{t+1}^{\prime}\sim G P\left(s_{t}, a_{t}\right)} \left[s_{t+1}^{\prime}\right]$ 
with probability of $(1-\delta)$~(Section A.2 in ~\cite{grande2014computationally}). 
In the same way, we have  $R(s'_{t+1}) + \frac{L_r}{\omega^2_n}\text{Var}_{s'_{t+1}\sim GP(s_t,a_t)}[R(s'_{t+1})] - R(s_{t+1})\geq 0$ with probability $1-\delta$. 
It follows that Algorithm \ref{alg:training} remains optimistic with probability at least 1-2$\delta$.

\paragraph{Accuracy} Following the Lemma 1 from \cite{grande2014sample}, the prediction error at $x_i\in \mathcal{X}$ is bounded in probability $Pr\left\{\left|\hat{\mu}\left(x_{i}\right)-f\left(x_{i}\right)\right| \geq \epsilon_{1}\right\} \leq \delta_{1}$ if the predictive variance of the GP at $ \sigma_n^{2}\left(x_{i}\right) \leq \sigma_{\text{tol}}^{2}=\frac{2 \omega_{n}^{2} \epsilon_{1}^{2}}{V_{m}^{2} \log \left(\frac{2}{\delta_{1}}\right)}$. 

\paragraph{Learning complexity} \textbf{(a)} For $\forall x\in\mathcal{X}$ with $\sigma_0^2 = 1$, let $\rho = \min\limits_{j=1,\dots,n} k(x,x_j)$, $d_{\max} = \max\limits_{j=1,\dots,n} d(x,x_j)$, then $d_{\max}$ can be rewritten as $d_{\max} = \sqrt{2\theta^2\log{\frac{1}{\rho}}}$ with $\rho =e^{-\frac{d_{\max}^2}{2\theta^2}}$. If $n\geq \frac{\omega_n^2}{\sigma_{\text{tol}}^2-1+\rho^2}$, the posterior variance of $x$ satisfies $\sigma_n^2(x)\leq 1-\frac{n\rho^2}{n+\omega^2_n}\leq\frac{n(1-p^2)+\omega^2_n}{n}\leq \sigma^2_{\text{tol}}$. \textbf{(b)} Since $\mathcal{X}$ is a compact domain with the length of $i$-th $\left(i\in [1,m]\right)$ dimension as $L_i$, we have $\mathcal{N}_{\mathcal{X}}\left(d_{\max }\right) \leq \mathcal{N}_{\mathcal{X}}\left(\frac{d_{\max }}{2}\right)$ and $\mathcal{X}$ can be covered by $2^{m} \prod_{j=1}^{m} L_{j} / d_{\max }^{m}$ balls $\left\{B\left(c_{j}, \frac{d_{\max }}{2}\right)\right\}_{j}$, which centers at $c_j$ with a radius of $\frac{d_{\max }}{2}$.\\

Given \textbf{(a)} and \textbf{(b)}, $\forall x\in \mathcal{X}$, $\exists c_j\in\mathcal{X}$ s.t. $B(x,d_{\max})\supset B(x,d_{\max}/2)$. If there are at least $\frac{\omega_n^2}{\sigma_{\text{tol}}^2-1+\rho^2}$ observations in $B(x,d_{\max}/2)$, then (2) satisfied. Combining the Lemma 8 from~\cite{strehl2009reinforcement} and denoting $\tau(\sigma^2_{\text{tol}}) = {d_{\max}}/2$, the total number of updates occurs will be bounded by $\zeta(\epsilon, \delta)=\left(\frac{4 V_{m}^{2}}{\epsilon^{2}(1-\gamma)^{2}} \log \left(\frac{2\left|\mathcal{N}_{\mathcal{X}}\left(\tau\left(\sigma_{\mathrm{tol}}^{2}\right)\right)\right|}{\delta}\right)\right) \mathcal{N}_{\mathcal{X}}\left(\tau\left(\sigma_{\mathrm{tol}}^{2}\right)\right)$ with probability of $1-\delta$.  

Now that the key properties of optimism, accuracy, and learning complexity have been established, the general PAC-MDP Theorem 10 of~\cite{strehl2009reinforcement} is invoked.
\end{proof}

Theorem~\ref{th:sample_complexity} allows us to guarantee that the number of steps in which the performance of VB-MCTS is significantly worse than that of an optimal policy starting from the current state is at most log-linear in the covering number of the state-action space with probability $1-\delta$.

\section{Experiments}
In this section, we conduct extensive experiments on two different OpenMalaria~\cite{smith2008towards} based simulators: {\it SeqDecChallenge} and {\it ProveChallenge}, which are the testing environments used in KDD Cup 2019. In these simulators, the parameters of the simulator are hidden from the RL agents since the {\it“simulation parameters”} for SSA are unknown for policymakers. Additionally, to simulate the real disease control problem, only {\bf 20 trials} were allowed before generating the final decision, which is much more challenging than traditional RL tasks. These simulating environments are available at \url{https://github.com/IBM/ushiriki-policy-engine-library}.


\paragraph{Agents for Comparison}
To show the advantage of VB-MCTS, many benchmarking reinforcement learning methods and open source solutions have been deployed to verify its effectiveness: \textbf{Random Policy}: The random policy is executed in 20 trials and chooses the generated policy with the maximum reward as the final decision. \textbf{SMAB}: This kind of policy treats the problem as a Stochastic Multi-Armed Bandit problem and independently optimizes the policy every year with Thompson sampling~\cite{chapelle2011empirical}. \textbf{CEM}: Cross-Entropy Method is a simple gradient-free policy searching method~\cite{szita2006learning}. \textbf{CMA-ES}: CMA-ES is a gradient-free evolutionary approach to optimizing non-convex objective functions~\cite{krause2016cma}. \textbf{Q-learning-GA}: It learns the malaria control policy by combining Q-learning and Genetic Algorithm. \textbf{Expected-Sarsa}: It collects 13 random episodes and runs expected value SARSA~\cite{van2009theoretical} for 7 episodes to improve the best policy using the collected statistics. \textbf{GP-Rmax}: It uses GP learners to model $T$ and $R$ and replaces the value of any $Q^\ast (s, a)$ where $T (s, a)$ is {\it "unknown"} with a value of $\frac{R_{\max }}{1-\gamma}$~\cite{li2011knows,grande2014sample}. \textbf{GP-MC}: It employs Gaussian Process to regress the world model. The policy is generated by sampling from the posterior and choosing the max rewarded action. \textbf{VB-MCTS}: Our proposed method.

\paragraph{Implementation Details} We build a feature map of 14-dimension for this task, which includes the periodic feature and cross term feature. Specifically, the feature map is set as,
{\small\begin{eqnarray}
\nonumber
&&\phi(s_t,a_t) = [t,t\%2,t\%3,r_{t-1},a_{(t-1,{ITN})},a_{(t-1,{IRS})},\\ \nonumber
&& a_{(t,{ITN})},a_{(t,{IRS})},a_{(t,{ITN})}\times a_{(t,{IRS})},\\ \nonumber
&& a_{(t-1,{ITN})}\times a_{(t-1,{IRS})},a_{(t,{ITN})}\times a_{(t-1,{ITN})},  \\\nonumber
&& a_{(t,{IRS})}\times a_{(t-1,{IRS})},a_{(t,{ITN})}\times \left(1-a_{(t-1,{ITN})}\right),  \\ \nonumber
&& a_{(t,{IRS})}\times \left(1-a_{(t-1,{IRS})}\right)],
\end{eqnarray}}
where $t,t\%2,t\%3$ are the periodic features and $a_{(\ast,{\ast})}\times a_{(\ast,{\ast})}$ is the cross term feature. Since the predicted variance of state and reward are the same in our setting, we empirically set the sum of exploration/exploitation parameters as $\beta_1+\beta_2 = 3.5$ in the experiments. In MCTS, $c_{puct}$ is 5, and only the top 50 rewarded child nodes are expanded. The number of iterations does not exceed 100,000. For Gaussian Process, to avoid overfitting problems, 5-fold cross-validation is performed during the updates of the GP world model. Particularly, we use 1-fold for training and 4-fold for validation, which ensures the generalizability of the GP world model over different state-action pairs. Our implementation and all baseline codes are available at \url{https://github.com/zoulixin93/VB_MCTS}.

{\centering
\small
\begin{table}[!hpt]
\tabcolsep 0.04in
\caption{Performance comparisons between different agents.}
\label{tab:results}
\begin{tabular}{l|| c c c} \ChangeRT{1pt}
\centering
{\multirow{3}*{Agents}}
&\multicolumn{3}{c}{\textit{SeqDecChallenge}} \\
& Med. Reward & Max Reward & Min Reward  \\ \hline\hline
Random Policy & 167.79 & 193.24 & 135.06  \\
SMAB & 209.05 & 386.28 & -6.44 \\
\hline
CEM & 179.30 & 214.87 & 120.92 \\
CMA-ES & 185.34 & 246.12 & 108.18 \\
Q-Learning-GA & 247.75 & 332.40 & 171.33  \\
Expected-Sarsa & 462.76 & 495.03 & 423.93  \\
\hline
GP-Rmax & 233.95 & 292.99 & 200.35 \\
GP-MC & 475.99 & 499.60 & 435.51 \\
\rowcolor{lightgray}
VB-MCTS & \textbf{533.38} & \textbf{552.78} & \textbf{519.61}  \\
\ChangeRT{0.8pt}
{\multirow{3}*{Agents}} 
&\multicolumn{3}{c}{\textit{ProveChallenge}} \\
& Med. Reward & Max Reward & Min Reward  \\ \hline\hline
Random Policy  & 248.25 & 464.92 & 55.24  \\
SMAB & 18.02 & 135.37 & -56.86 \\
\hline
CEM & 229.61 & 373.83 & 20.09 \\
CMA-ES & 289.03 & 314.57 & 92.95 \\
Q-Learning-GA & 242.97 & 325.24 & 88.70\\
Expected-Sarsa & 190.08 & 296.16 & 140.86 \\
\hline
GP-Rmax & 287.45 & 371.49 & 153.98 \\
GP-MC  & 300.37 & 447.15 & \textbf{263.96}\\
\rowcolor{lightgray}
VB-MCTS & \textbf{352.17} & \textbf{492.23} & 259.97 \\
\ChangeRT{0.8pt}
\end{tabular}
\end{table}}

\subsection{Results}
\paragraph{Main Results} In Table \ref{tab:results}, we report the median reward, the maximum reward, and the minimal reward over 10 independent repeat runs. The results are quite consistent with our intuition. We have the following observations: \textbf{(1)} For the finite-horizon decision-making problem, treating it as SMAB does not work, and the delayed influence of actions can not be ignored in malaria control. As presented in Table~\ref{tab:results}, SMAB's performances have large variance and are even worse than random policy in {\it ProveChallenge}. \textbf{(2)} Overall, two model-based methods~(GP-MC and VB-MCTS) consistently outperform the model-free methods~(CEM, CMA-ES, Q-learning-GA, and Expected-Sarsa) in {\it SeqDecChallenge} and {\it ProveChallenge}, which indicates that empirically model-based solutions are generally more data-efficient than model-free solutions. From the results in Table~\ref{tab:results}, performances of model-free methods are defeated by model-based methods with a large margin in {\it SeqDecChallenge}, and their performances are almost the same as the random policy in {\it ProveChallenge}. \textbf{(3)} The proposed VB-MCTS can outperform all the baselines in {\it SeqDecChallenge} and {\it ProveChallenge}. Compared with GP-MC and GP-Rmax, the advantage from the efficient MCTS with variance-bonus reward leads to the success on {\it SeqDecChallenge} and {\it ProveChallenge}.
	
{\begin{figure}[hbt]
\centering
\includegraphics[width=2.7in]{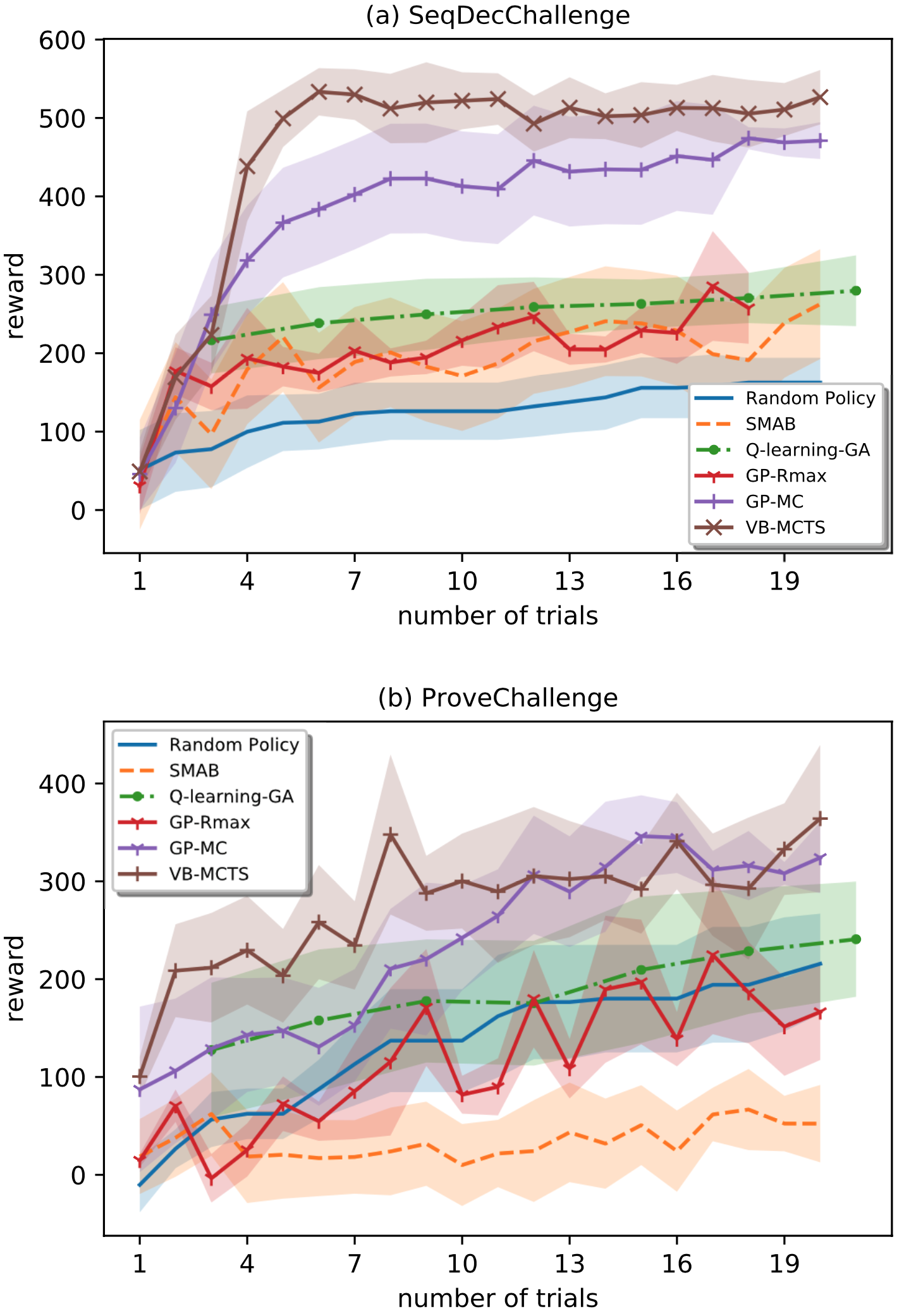}
\caption{Malaria policy learning curves in five different agents.}
\label{fig:learning_curves}
\end{figure}}

\paragraph{Data Efficiency}
This paragraph compares data-efficiency~(required trials) of VB-MCTS with other RL methods that learn malaria policies from scratch. In Figure~\ref{fig:learning_curves}(a) and~\ref{fig:learning_curves}(b), we report agents' performances after collecting every trial episode in {\it SeqDecChallenge} and {\it ProveChallenge}. The horizontal axis indicates the number of trials. The vertical axis shows the average performance after collecting every episode. Figure \ref{fig:learning_curves}(a) and \ref{fig:learning_curves}(b) highlighted that our proposed VB-MCTS approach (brown) requires on average only 8 trials to achieve the best performances in {\it SeqDecChallenge} and {\it ProveChallenge}, including the first random trial. The results indicate that VB-MCTS can outperform the state-of-the-art method on both data efficiency and performance.

\section{Conclusion and Future Work}

We proposed a model-based approach employing the Gaussian Process to regress the state transition for data-efficient RL in malaria control. By planning with variance-bonus reward, our method can naturally deal with the dilemma of exploration and exploitation by efficiently MCTS planning. Extensive experiments conducted on the challenging malaria control task have demonstrated the advantage of VB-MCTS over state-of-the-arts both on performance and efficiency. However, the stationary setting of MDP may be unrealistic due to the development of disease control tools and the evolution of the disease. Therefore, data-efficient reinforcement learning under nonstationary settings will be more realistic and more challenging task.

\bibliographystyle{named}
\bibliography{ijcai21}

\end{document}